\newtheorem{lemma}{Lemma}
\newtheorem{proof}{Proof}
\journal{Journal of \LaTeX\ Templates}
\begin{document}

\begin{frontmatter}

\title{Stochastic Collapsed Variational Inference for Structured Gaussian Process Regression Network}
\tnotetext[mytitlenote]{Fully documented templates are available in the elsarticle package on \href{http://www.ctan.org/tex-archive/macros/latex/contrib/elsarticle}{CTAN}.}

%% Group authors per affiliation:
% \author{Rui Meng \fnref{myfootnote}}
% \address{Lawrence Berkeley National Laboratory}
% \fntext[myfootnote]{Since 1880.}

% %% or include affiliations in footnotes:
% \author[mymainaddress,mysecondaryaddress]{Elsevier Inc}
% \ead[url]{www.elsevier.com}

% \author[mysecondaryaddress]{Global Customer Service\corref{mycorrespondingauthor}}
% \cortext[mycorrespondingauthor]{Corresponding author}
% \ead{support@elsevier.com}

% \address[mymainaddress]{1600 John F Kennedy Boulevard, Philadelphia}
% \address[mysecondaryaddress]{360 Park Avenue South, New York}

\author[LBL1]{Rui Meng}
\author[UCSC]{Herbert K. H. Lee}
\author[LBL1,LBL2,UCB1,UCB2]{Kristofer Bouchard}
\address[LBL1]{Biological Systems and Engineering Division, Lawrence Berkeley National Laboratory}
\address[LBL2]{Scientific Data Division, Lawrence Berkeley National Laboratory}
\address[UCB1]{Helen Wills Neuroscience Institute, UC Berkeley}
\address[UCB2]{Redwood Center for Theoretical Neuroscience, UC Berkeley}
\address[UCSC]{University of California, Santa Cruz}

\begin{abstract}
This paper presents an efficient variational inference framework for deriving a family of structured gaussian process regression network (SGPRN) models. The key idea is to incorporate auxiliary inducing variables in latent functions and jointly treats both the distributions of the inducing variables and hyper-parameters as variational parameters. Then we propose structured variable distributions and marginalize latent variables, which enables the decomposability of a tractable variational lower bound and leads to stochastic optimization. Our inference approach is able to model data in which outputs do not share a common input set with a computational complexity independent of the size of the inputs and outputs and thus easily handle datasets with missing values. We illustrate the performance of our method on synthetic data and real datasets and show that our model generally provides better imputation results on missing data than the state-of-the-art. We also provide a visualization approach for time-varying correlation across outputs in electrocoticography data and those estimates provide insight to understand the neural population dynamics.
\end{abstract}

\begin{keyword}
Inducing points, Corregionalization, spatial varying parameters
\end{keyword}

\end{frontmatter}

% \linenumbers

\section{Introduction}\label{sec:introduction}
Multi-output regression problems have arisen in various fields, including multivariate physiological time-series analysis \citep{durichen2014multitask}, chemometrics \citep{burnham1999latent}, and multiple-input multiple-output frequency nonselective channel estimation \citep{sanchez2004svm}. Often, the processes that generate such datasets are nonstationary. Modern instrumentation has resulted in ever increasing numbers of observations, as well as the occurrence of missing values. This motivates the development of scalable methods to forecast in such data sets.

Multi-ouput Gaussian process models or multivariate Gaussian process models (MGP) generalise the powerful Gaussian process predictive model to vector-valued random fields \citep{alvarez2010efficient,alvarez2011computationally}. Those models demonstrate improved prediction performance compared with the univariate Gaussian process because MGPs express correlation between outputs. Since the correlation information of data is encoded in the covariance function, modeling the flexible and computationally efficient cross-corvariance function is of interest. In the literature of MGPs, many approaches to building cross-covariance functions are based on combining univariate covariance functions. Specifically, those approaches can be classified into three categories: 
the linear model of coregionalization (LMC) \citep{bourgault1991multivariable, goulard1992linear} where the cross-covariance function is a linear function of valid stationary correlation functions, convolution techniques \citep{ver1998constructing, ver2004flexible, gneiting2010matern} where the cross-covariance is modeled as a process convolution of marginal covariance functions, and use of latent dimensions \citep{apanasovich2010cross} where it assumes the cross-covariance depends on latent dimensions. While convolution techniques and use of latent dimensions can utilize nonstationary kernels to enhance model flexibility, compared with LMC models, those models and parameters are hard to interpret and require Monte Carlo simulations, making inference in large datasets computationally expensive. 

To achieve both better interpretability and nonstationary behaviors, \cite{gelfand2004nonstationary,wilson2011gaussian,kleiber2012nonstationary,meng2021nonstationary} consider input-dependent coefficients in LMC. Such models can handle input-varying correlation across multivariate outputs. Especially for multivariate time series, \cite{meng2021nonstationary} propose a structured Gaussian process regression network (SGPRN) that captures time-varying scale, correlation and smoothness. Compared with the Gaussian process regression network \cite{wilson2011gaussian}, SGPRN employs stochastic lower triangular mixing coefficients with positive diagonal values and puts shared varying-lengthscale Gaussian processes for latent functions. It shows promising fitting and prediction performance on synthetic and electronic health records. However, due to the computation complexity of SGPRN, both maximum a posterior (MAP) and Monte Carlo Markov Chain (MCMC) inference is difficult to handle applications where either the number of observations and dimension size is large. Also those inference cannot be easily extended to incomplete datasets where part of outputs are missing.

We propose an efficient variational inference approach for SGPRN by employing the inducing variable framework on all latent processes \citep{titsias2010bayesian}, proposing a tractable variational bound amenable to doubly stochastic variational inference. We call our approach variational SGPRN (VSGPRN). This variational inference framework allows the model to handle missing data without increasing the computational complexity. We numerically provide evidence of the benefits of simultaneously modeling time-varying correlation, scale and smoothness in both a synthetic experiment and three different real-world problems. 

The main contributions of this work are threefold. 
\begin{itemize}
    \item Learning structured Gaussian process regression network using inducing variables on both mixing coefficients and latent functions.
    \item Employing doubly stochastic variational inference for structure Gaussian process regression network by performing exact marginalization of latent variables and constructing a tractable lower bound of log likelihood, allowing it suitable for mini-batching learning.
    \item Demonstrating that our proposed algorithm succeeds in handling time-varying correlation on missing data under different scenarios in both synthetic data and real datasets, and our method provides a visualization approach to understand dynamics of correlation and smoothness of data.
\end{itemize}

The structure of this paper is presented as follows: We first introduce related work in section~\ref{sec:related_work}. Then we review the SGPRN model \cite{kleiber2012nonstationary,meng2021nonstationary} in section~\ref{sec:model}. An efficient variational inference approach is proposed in Section~\ref{sec:inference}. Finally, our approach is illustrated on both synthetic experiments and three real datasets in Section~\ref{sec:experiments} and we delivery conclusions in Section~\ref{sec:conclusion}.

% Our work maintains an explicit representation of the posterior over the inducing variables that is learned from all outputs and facilitates scalable learning in as similar fashion to the approach in \cite{salimbeni2017doubly,Hensman_2013,nguyen2014collaborative}.

\section{Related Work}\label{sec:related_work}
Most multivariate gaussian process models build correlated outputs by mixing a set of independent latent processes. The mixing can be a linear combination with fixed coefficients \citep{goulard1992linear,seeger2005semiparametric,bonilla2008multi}. Those models are known as the linear coregionalization model (LMC) \citep{wackernagel2013multivariate} in the geostatistics literature. Based on the LMC structure, more sophisticated models are proposed. \cite{titsias2011spike} place a spike and slab prior over the coefficients. \cite{gelfand2004nonstationary,wilson2011gaussian,nguyen2013efficient,li2020scalable,meng2021nonstationary,meng2021bayesian} model more complex dependencies using input-dependent coefficients. 

The Gaussian process regression network (GPRN) is proposed in  \cite{wilson2011gaussian} and efficient inference approaches are studied in \cite{nguyen2013efficient,li2020scalable}. GPRN is a linear coregionalization model with input-dependent coefficients and the coefficients across time for all elements of the coefficient matrix are modeled by independent stationary Gaussian processes. However, GPRN is not identifiable for the coefficients since the decomposition of covariance matrix is not unique \cite{meng2021nonstationary}. This makes model interpretation challenging at best. To tackle with this issue, \cite{gelfand2004nonstationary} consider a matrix-variate spatial Wishart process and more efficient models are proposed by directly putting constraints on coefficients in \cite{guhaniyogi2013modeling,meng2021nonstationary}. 

Our work develops an efficient variational inference algorithm for SGPRN by taking advantage of inducing variables and stochastic variational inference for scalability. Inducing variables are the key catalyst for achieving sparsity in Gaussian process models in \cite{titsias2010bayesian,Hensman_2013}. Moreover, \cite{nguyen2014collaborative} claims that sharing "sparsity structure" is not only a reasonable assumption, but a crucial component when modeling multi-output data. Stochastic variational inference plays a important role in scalable inference and has already demonstrated it efficiency in various models including deep Gaussian process \cite{salimbeni2017doubly} and neural process \cite{wang2020doubly}.

\section{Structured Gaussian Process Regression Network} \label{sec:model}
\begin{figure*}[ht!]
    \centering
    \includegraphics[width = \textwidth]{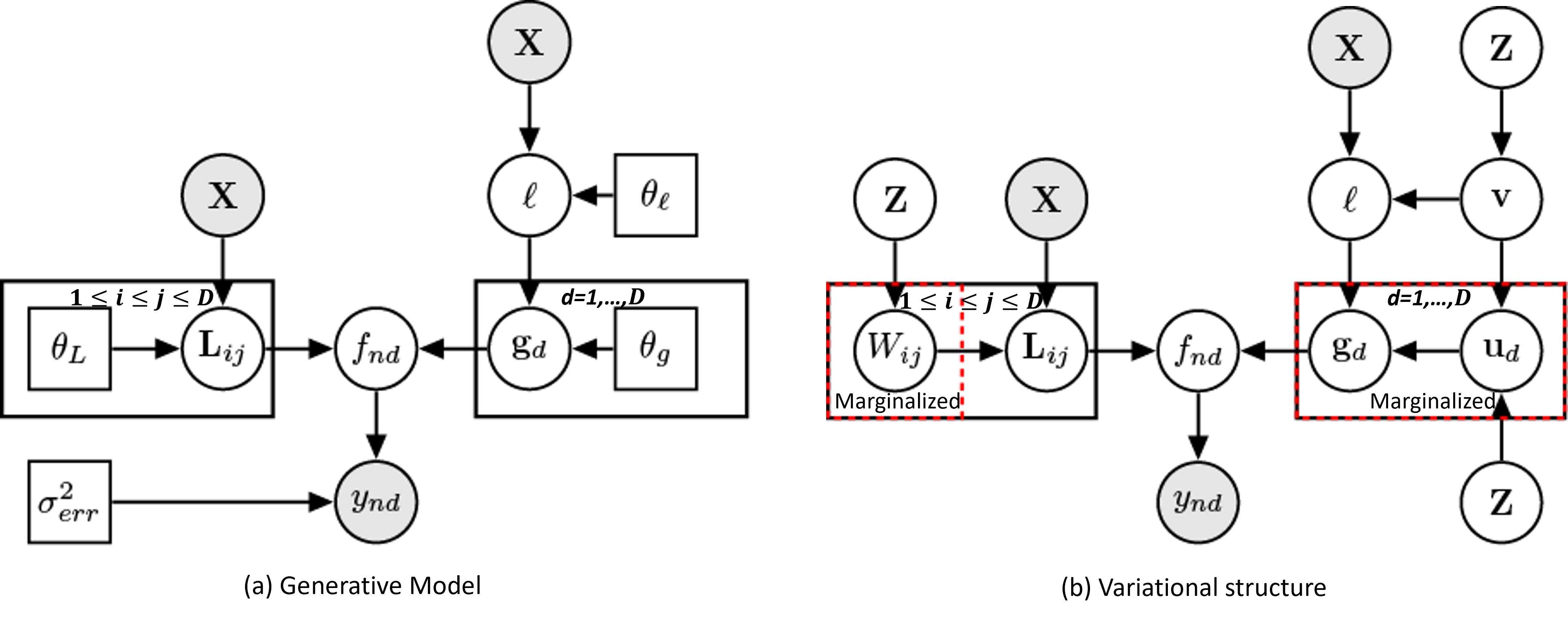}
    \caption{Graphical model of structured Gaussian process regression network. Left: Illustration the generative model. Right: Illustrate of the variational structure. Dashed red block means that we marginalize out those latent variables in variational inference framework.}
    \label{fig:graphical_model}
\end{figure*}

Assume $\bm y(\bm x) \in \mathbb{R}^D$ is a vector-valued function of $\bm x \in \mathbb{R}^P$, where $D$ is the dimension size of outputs and $P$ is the dimension size of inputs. The structured Gaussian process regression network (SGPRN) model assumes that noisy observations $\bm y(\bm x)$ are the linear combination of latent variables $\bm g(\bm x) \in \mathbb{R}^D$, corrupted by Gaussian noise $\bm \epsilon(\bm x)$. The coefficients $\bm L(\bm x) \in \mathbb{R}^{D \times D}$ of the latent functions are assumed to be a stochastic lower triangular matrix with positive values on the diagonal for model identification \cite{guhaniyogi2013modeling,meng2021nonstationary}. Thus, the SGPRN is defined in Figure~\ref{fig:graphical_model} and is shown as follows:
\begin{align}
 \bm y(\bm x) & = \bm f(\bm x) + \bm \epsilon(\bm x), \quad \bm f(\bm x) = \bm L(\bm x)\bm g(\bm x), \nonumber \\ 
 \bm\epsilon(\bm x) & \stackrel{iid}{\sim} \mathcal{N}(0, \sigma^2_{err}I)\,. \label{eq:TVLMC}
\end{align}

Moreover, each latent function $g_d$ in $\bm g$ is independently sampled from a Gaussian process (GP) with a non-stationary kernel $K^g$ and the stochastic coefficients are modeled via a structured GP based prior as proposed in \cite{guhaniyogi2013modeling} with a stationary kernel $K^l$ such that
\begin{align}
g_d & \stackrel{iid}{\sim} \mathrm{GP}(0, K^g)\,, \qquad d=1, \ldots, D\,, \nonumber \\
l_{ij} & \sim \begin{cases}
                 \mathrm{GP}(0, K^l)\,, & i > j\,,  \\
                 \mathrm{logGP}(0, K^l)\,, & i = j\,,
                 \end{cases} 
\label{eq:prior_GP}
\end{align}
where $\mathrm{logGP}$ denotes the log Gaussian process \citep{moller1998log}. $K^g$ is modelled as a Gibbs correlation function
\begin{align}
    K^g(\bm x, \bm x')  & = \sqrt{\frac{2\ell(\bm x)\ell'(\bm x)}{\ell(\bm x)^2 + \ell(\bm x')^2}}\exp\left(-\frac{\|\bm x-\bm x'\|^2}{\ell(\bm x)^2 + \ell(\bm x')^2}\right)\,, \nonumber \\
    \ell & \sim \mathrm{logGP}(0, K^{\ell})\,, \nonumber 
\end{align}
 where $\ell$ determines the input-dependent length scale of the shared correlations in $K^g$ for all latent functions $g_d$. This varying length-scale process $\ell$ plays an important role to model nonstationary time series illustrated in \cite{remes2017non, meng2021nonstationary}.

Given the stochastic coefficients $\bm L$, the cross covariance function of $\bm f(\bm x)$ is
\begin{align}
    K^f(\bm x, \bm x', m, m') = K^g(\bm x, \bm x') \bm l_{m}(\bm x)\bm l_{m'}^T(\bm x') \,, \label{eq:kernel}
\end{align} 
where $\bm l_{m}(\bm x)$ denotes the $m^{\text{th}}$ row of $\bm L(\bm x)$. With a deterministic coefficient matrix $\bm L(\bm x) \equiv \bm L$ and a stationary GP for all latent processes $\bm g$, this model is equivalent to the intrinsic coregionalization model \citep{goulard1992linear}. 

Let $\bm X = \{\bm x_i\}_{i=1}^N$ be the set of observed inputs and $\bm Y = \{\bm y_i\}_{i=1}^N$ be the set of observed outputs. Denote $\bm \eta$ as the concatenation of all coefficients and all log length-scale parameters, i.e., $\bm \eta = (\bm l, \tilde{\bm \ell})$ evaluated at training inputs $\bm X$. Here, $\bm l$ is a vector including the entries below the main diagonal and the entries on the diagonal in the log scale and $\tilde{\bm \ell} = \log \bm \ell$ is the length-scale parameters in log scale. Also, denote $\bm \theta = (\bm \theta_l, \bm\theta_\ell, \sigma^2_{err})$ as all hyper-parameters, where $\bm \theta_l$ and $\bm\theta_\ell$ are the hyper-parameters in kernel $K_l$ and $K_\ell$. According to the model specification in (\ref{eq:TVLMC}), the prior over $\bm \eta$ is a $N(D(D+1)/2+1)$ dimensional multivariate Gaussian distribution with a block diagonal covariance matrix $p(\bm \eta| \bm \theta_l, \bm \theta_\ell) = \mathcal{N}(\bm 0, \bm C_\eta)$
where the first $M(M+1)/2$ blocks of $\bm C_\eta$ are induced by the kernel $K_l$ and the last one block is induced by the kernel $K_\ell$.

Given model parameters $\bm \eta$ and hyper-parameters $\bm \theta$, by marginalizing the latent function $\bm g$, the conditional likelihood is $
p(\bm Y|\bm \eta, \sigma^2_{err}) = \mathcal{N}(\bm y| \bm 0, \bm K^f + \sigma^2_{err}\bm I)$ where $\bm K^f$ is the covariance function $K^f$ in (\ref{eq:kernel}) evaluated at training inputs $\bm X$. Hence, the main inference task in the SGPRN is maximum \textit{a posteriori} by maximizing the posterior
\begin{align}
    p(\bm \eta|\bm Y, \bm \theta) \propto p(\bm Y|\bm \eta, \sigma^2_{err})p(\bm \eta| \bm \theta_l, \bm \theta_\ell)\,, \label{eq:pos_NMGP}
\end{align}
which is computationally intractable in general because the computational complexity of $p(\bm \eta|\bm Y, \bm \theta)$ is $\mathcal{O}(N^3D^3)$. To overcome this issue, we propose an efficient variational inference to significantly reduce the computational burden in the next section.

\section{Inference} \label{sec:inference}
Despite the success of the existing SGPRN inference methods, current inference methods are only available for complete data and the computational cost of inference is prohibitive for massive high-dimensional outputs that are common in many real-world datasets. To alleviate the computational burden associated with (\ref{eq:pos_NMGP}), we introduce a shared set of inducing inputs $\bm Z = \{\bm z_m\}_{m=1}^M$ that lie in the same space as the inputs $\bm X$ and a set of shared inducing variables $\bm w_d$ for each latent function $g_d$ evaluated at the inducing inputs $\bm Z$. Likewise, we consider inducing variables $\bm u_{ii}$ for function $\log L_{ii}$ when $i = j$, $\bm u_{ij}$ for function $L_{ij}$ when $i > j$, and inducing variables $\bm v$ for function $\log \ell(\bm x)$ evaluated at inducing inputs $\bm Z$. We denote those collective variables as $\bm l=\{\bm l_{ij}\}_{i\geq j}$, $\bm u = \{\bm u_{ij}\}_{i\geq j}$, $\bm g = \{\bm g_d\}_{d=1}^D$, $\bm w = \{\bm w_d\}_{d=1}^D$, $\bm \ell$ and $\bm v$. Then we redefine the model parameters $ \bm \eta= (\bm l, \bm u, \bm g, \bm w, \bm \ell, \bm v)$, and the prior of those model parameters is
\begin{align}
    p(\bm \eta) = p(\bm l| \bm w)p(\bm w)p(\bm g|\bm u, \bm\ell, \bm v)p(\bm u)p(\bm \ell|\bm v)p(\bm v)\,. \label{eq:prior}
\end{align}

The core assumption of inducing point-based sparse inference is that the inducing variables are sufficient statistics for the training and testing data in the sense that training and testing data are independent given the inducing variables. In the context of our model, it suggests that the posterior processes of $L$, $g$ and $\ell$ are sufficiently determined by the posterior distribution of $\bm u$, $\bm w$ and $\bm v$. To conduct the variational inference, we propose structured variational distributions in Section~\ref{sec:SVD}. Given the proposed structured variational distributions, we derive the evidence lower bound (ELBO) in Section~\ref{sec:VELB}. Due to the nonconjugaty of this model, instead of doing expectation in ELBO, we perform the marginalization on inducing variables $\bm u$, $\bm w$ and $\bm g$, and then use the reparameterization trick to apply end-to-end training with stochastic gradient descend in Section~\ref{sec:learning}. We provide the prediction procedure in Section~\ref{sec:prediction} and discuss the inference procedure for missing data in Section~\ref{sec:missing}.

\subsection{Structured Variational Distribution} \label{sec:SVD}
To capture the posterior dependency between the latent functions, we propose a structured variational distribution of model parameters $\bm \eta$ used to approximate its posterior distribution as
\begin{align}
    q(\bm \eta) & = p(\bm l| \bm u) p(\bm g| \bm w, \bm \ell, \bm v) p(\bm \ell| \bm v) q(\bm u, \bm w, \bm v)\,. \label{eq:var_dist}
\end{align}
This variational structure is illustrated in Figure~\ref{fig:graphical_model}. The variational distribution of inducing variables $q(\bm u, \bm w, \bm v)$ fully characterizes the distribution of $\bm q(\bm \eta)$. Thus, the inference of $q(\bm u, \bm w, \bm v)$ is of interest. Furthermore, we assume the parameters $\bm u$, $\bm w$, and $\bm v$ are Gaussian and mutually independent,
\begin{align}
    q(\bm u, \bm w, \bm v) = \prod_{i \geq j} \mathcal{N}(\bm u_{ij}| \bm m^u_{ij}, S^u_{ij}) \prod_{d=1}^D \mathcal{N}(\bm w_d| \bm m^w_{d}, S^w_{d}) \mathcal{N}(\bm v|\bm m^v, S^v)\,,
\end{align}

Given the definition of Gaussian process priors in (\ref{eq:prior_GP}), the conditional distributions $p(\bm l| \bm u)$, $p(\bm g| \bm w, \tilde{\bm \ell}, \bm v)$, and $p(\bm \ell| \bm v)$ have closed-form expressions as follows

\begin{align}
    p(\bm l|\bm u) & = \prod_{i = j} \log\mathcal{N}(\bm l_{ii}| \bm \mu_{ii}^l, \Sigma_{ii}^l) \prod_{i > j}\mathcal{N}(\bm l_{ij}| \bm \mu_{ij}^l, \Sigma_{ij}^l) \,, \label{eq:cond_l} \\
    p(\bm g| \bm w, \bm \ell, \bm v) & = \prod_{d=1}^D \mathcal{N}(\bm g_d| \bm \mu_d^g, \Sigma_d^g) \,, \label{eq:cond_g}\\
    p(\bm \ell| \bm v) & = \log\mathcal{N}(\bm \ell| \bm \mu^\ell, \Sigma^\ell) \,. \label{eq:cond_ell}
\end{align}
       
The derivation of the conditional mean and conditional covariance matrix is available in Appendix A.1.

\subsection{Variational Evidence Lower Bound}\label{sec:VELB}
The evidence lower bound (ELBO) of the log likelihood of observations under our structured variational distribution $q(\bm \eta)$ is derived using Jensen's inequality as:
\begin{align}
    \log p(\bm Y) & = \log \int p(\bm Y, \bm \eta)d \bm \eta \nonumber \\
    & \geq E_{q(\bm \eta)}\left[\log\left(\frac{p(\bm Y| \bm g, \bm l)p(\bm u)p(\bm w)p(\bm v)}{q(\bm u, \bm w, \bm v)}\right)\right]  \nonumber \\
    & = \sum_{n = 1}^N \sum_{d = 1}^D E_{q(\bm g_n, \bm l_n)}\log(p(y_{nd} | \bm g_n, \bm l_n)) + A\,, \label{eq:ELBO}
\end{align}
where $A = \mathrm{KL}(q(\bm u)||p(\bm u)) + \mathrm{KL}(q(\bm w)||p(\bm w)) + \mathrm{KL}(q(\bm v)||p(\bm v))$ is a regularization term, $\bm g_n = \{g_{dn} = (\bm g_d)_n\}_{d=1}^D$ and $\bm l_n = \{l_{ijn} = (\bm l_{ij})_n\}_{i\geq j}$.

The structured decomposition \eqref{eq:var_dist} has been used by \cite{titsias2010bayesian} and \cite{Hensman_2013} to derive variational inference for the single output case and it is also used by \cite{nguyen2014collaborative} for a multivariate output case. The benefit of this structure is that the conditional distributions, \eqref{eq:cond_l}, \eqref{eq:cond_g} and \eqref{eq:cond_ell} are cancelled in the derivation of the lower bound in \eqref{eq:ELBO}, which alleviates the computational burden of inference.  Because the first term in (\ref{eq:ELBO}) shows that the observations are all conditionally independent given $\bm g$ and $\bm l$, the lower bound decomposes across both inputs and outputs and this enables the use of stochastic optimization methods. Moreover, since $q(\bm u)$, $p(\bm u)$, $q(\bm w)$, $p(\bm w)$, $q(\bm v)$ and $p(\bm v)$ are all multivariate Gaussian distributions, the KL divergence terms are analytically tractable. The challenge is to solve for the individual expectations because it is intractable to derive the marginal posterior of $\bm g$ and $\bm l$. Therefore, instead of doing expectation, we consider stochastic inference which requires sampling $\bm g$ and $\bm l$ from their variational distribution. The sampling based learning approach is provided in the next section.

\subsection{Learning the parameters of the Variational Distribution} \label{sec:learning}

To achieve efficient sampling for $\bm l$ and $\bm g$ from the variational posterior $q(\bm \eta)$, we marginalize unnecessary intermediate variables. By marginalizing the inducing variables $\bm u$ and  $\bm w$, we obtain the marginal distributions 
\begin{align}
    q(\bm l) & = \prod_{i=j}\log\mathcal{N}(\bm l_{ii}| \tilde{\bm \mu}_{ii}^l, \tilde{\Sigma}_{ii}^l)\prod_{i>j}\mathcal{N}(\bm l_{ij}| \tilde{\bm \mu}_{ij}^l, \tilde{\Sigma}_{ij}^l) \,, \label{eq:marg_l} \\
    q(\bm g| \bm \ell, \bm v) & = \prod_{d=1}^D\mathcal{N}(\bm g_d|\tilde{\bm \mu}_d^{g}, \tilde{\Sigma}_d^{g}) \,, \label{eq:marg_g}
\end{align}
with a joint distribution $q(\bm \ell, \bm v) = p(\bm \ell|\bm v)q(\bm v)$, where the conditional mean and covariance matrix are derived in Appendix A.2.

According to the marginal distributions (\ref{eq:marg_l}) and (\ref{eq:marg_g}), the marginal distributions for latent factors $\bm g_n$ and coefficients $\bm l_n$ in (\ref{eq:ELBO}) are derived as
$
q(\bm l_n) = \prod_{i = j}\log\mathcal{N}(l_{iin}| \tilde{\mu}_{iin}^l, \tilde{\sigma}^{l2}_{iin}) \prod_{i > j} \mathcal{N}(l_{ijn}| \tilde{\mu}_{ijn}^l,  \tilde{\sigma}^{l2}_{ijn})$ and $
q(\bm g_n| \bm \ell, \bm v) = \prod_{d = 1}^D\mathcal{N}(g_{nd}| \tilde{\mu}_{dn}^g, \tilde{\sigma}^{g2}_{dn})$
where $\tilde{\sigma}^{l2}_{ijn}$ and $\tilde{\sigma}^{g2}_{dn} = \tilde{\Sigma}^g_{ijnn}$ are the $n^\mathrm{th}$ diagonal element of $\tilde{\Sigma}^l_{ij}$ and $\tilde{\Sigma}^g_{ij}$ respectively.

% Two approaches for the individual expectations in the first term of (\ref{eq:ELBO}) are proposed. The first approach integrates latent variables $\bm l_{d\cdot n}, \bm g_n, \ell_n ,\bm v$ from their variational distribution
% \begin{align}
%     &\mathrm{E}_{q(\bm g_n, \bm l_n)}\log(p(y_{nd}|\bm g_n, \bm l_n)) \nonumber \\
%     & = \int \log\mathcal{N}(y_{nd}|\sum_{j=1}^D l_{djn}g_{jn}, \sigma_{err}^2)q(\bm g_n|\ell_n, \bm v)
%     q( \ell_n, \bm v) \nonumber \\
%     & \quad q(\bm l_{d\cdot n})d(\bm l_{d\cdot n}, \bm g_n, \ell_n ,\bm v) \,. \label{eq:expectation_n_v0}
% \end{align}
% The second approach 

Moreover ,we marginalize the latent variables $\bm g_n$ and then the individual expectation is
\begin{align}
    & \mathrm{E}_{q(\bm g_n, \bm l_n)}\log(p(y_{nd}|\bm g_n, \bm l_n)) \nonumber \\
    & = \int\left(\log\mathcal{N}(y_{nd}|\sum_{j=1}^D l_{djn}\tilde{\mu}^g_{jn}, \sigma^2_{err}) - \frac{1}{2\sigma^2_{err}}\sum_{j=1}^D l_{djn}^2\tilde{\sigma}^{g2}_{jn}\right) \nonumber \\
    & \quad q(\ell_n, \bm v)q(\bm l_{d\cdot n})d(\bm l_{d\cdot n}, \ell_n, \bm v))\,. \label{eq:expectaion_n_v1}
\end{align}
The details of the marginalization are available in Appendix A.3.

Directly evaluating the ELBO is still challenging due to the non-linearities introduced by our structured prior. Recent progress in black box variational inference \citep{ranganath2014black,kingma2013auto,rezende2014stochastic, titsias2014doubly}
avoids this difficulty by computing noisy unbiased estimates of the gradient of ELBO, via approximating the expectations with unbiased Monte Carlo estimates and relying on either score function estimators \citep{ranganath2014black} or reparameterization gradients \citep{kingma2013auto,rezende2014stochastic,titsias2014doubly} to differentiate through a sampling process. In practice, reparameterization gradients exhibit significantly lower variances than score function estimators \citep{ghosh2018structured}. In this work, we leverage the reparameterization gradients for (\ref{eq:expectaion_n_v1}) to separate the source of randomness from the parameters with respect to which gradients are sought. Typically, for Gaussian variational approximation, the well known non-centered parameterization, $\xi \sim \mathcal{N}(\mu, \sigma^2) \Longleftrightarrow \epsilon\sim\mathcal{N}(0,1), \xi = \mu + \sigma \epsilon$, allows us to compute the Monte Carlo gradients.

The details of the reparameterization are available in Appendix A.4. Suppose all independent normal variables in the reparameterization are denoted by $\bm z$, then the Monte Carlo gradients are
\begin{align}
    &\quad \nabla_{\eta} \mathrm{E}_{q(\bm g_n, \bm l_n)}\log(p(y_{nd}|\bm g_n, \bm l_n)) \nonumber \\
    & = \frac{1}{S}\sum_s \nabla_{\eta}\log(p(y_{nd} | \bm g_n^{(s)}, \bm l_{d\cdot n}^{(s)}))\,, \label{eq:ELBO_grad}
\end{align}
where $S$ is the number of samples. $\bm g_n^{(s)}$ and $\bm l_{d\cdot n}^{(s)}$ depend on the randomness from $\bm z^{(s)}\sim \mathcal{N}(\bm 0, I)$.

Note that evaluating ELBO (\ref{eq:ELBO}) involves two sources of stochasticity. First, we approximate the expectation in (\ref{eq:expectaion_n_v1}) to compute the unbiased estimates of the gradients of ELBO (\ref{eq:ELBO_grad}). Second, since ELBO (\ref{eq:ELBO}) factorizes over observations, it allows us to approximate the bound with data sub-sampling stochasticity \citep{hoffman2010online, hoffman2013stochastic}.  On the other hand, all hyper-parameters $\bm \theta$ are allowed to be optimized in the stochastic optimization.

\subsection{Prediction} \label{sec:prediction}
Model prediction depends on the inferred variational distribution $q(\bm u, \bm w, \bm v)$. Given a new input $\bm x^*$, predictive distributions on the latent processes are obtained through the following sampling procedures. 
We first sample the length-scale parameters on both training inputs $\bm X$ and a new input $\bm x^*$ from $q(\bm \ell, \ell^*) = \int p(\bm \ell, \ell^*|\bm v)q(\bm v)d\bm v$. We denote the $s^{\mathrm{th}}$ samples as $\bm \ell^{(s)}$ and $\ell^{*(s)}$ respectively. Conditional on them, we sample the latent process $\bm g^{*(s)}$ at input $\bm x^*$ from $q(g^*_d|\bm \ell^{(s)}, \ell^{*(s)}) = \int p(g^*_d|\bm \ell^{(s)}, \ell^{*(s)}, \bm w)q(\bm w)d\bm w$. We sample the coefficients $\bm l^{*(s)}$ at input $\bm x^*$ from $q(l_{ij}^{*}) = \int p(l_{ij}^*|\bm u_{ij})q(\bm u_{ij})d\bm u_{ij}$. Finally, given $\bm g^{*(s)}$ and $\bm l^{*}$ we sample the observations $\bm y^{*(s)}$ at input $\bm x^*$ througn the linear mixing mechanism from $q(\bm y^*|\bm g^{*(s)}, \bm l^{*(s)}) = \int p(\bm y^*|\bm g^{*(s)}, \bm l^{*(s)}, \bm \epsilon^*)p(\bm \epsilon^*)d\bm \epsilon^*$.

\subsection{Inference for Missing Data} \label{sec:missing}

Because in ELBO (\ref{eq:ELBO}), observations $\{y_{nd}\}$ are mutually conditional independent on all the model parameters $\bm \eta$ and hyper-parameters $\bm \theta$, instead of summing up the likelihoods of complete data we take the sum of the individual likelihoods of observed data to compute the ELBO. The gradients of the ELBO are estimated by summing up the individual Monte Carlo gradients \eqref{eq:ELBO_grad} over all observed data.

\section{Experiments} \label{sec:experiments}
This section illustrates the performance of our model with numerical results. In particular, we focus on multivariate time series where the input dimension is one. We first show that our approach can model the time-varying correlation and smoothness of outputs on 2D synthetic datasets in three scenarios with respect to different types of frequencies but the same missing data mechanism. Then we compare the imputation performance on missing data with other inducing-variable based sparse multivariate Gaussian process models on two real datasets. Finally, we explore the dynamics of correlation of neuronal activities from different channels using electrocorticography data. All experiments are run on an Ubuntu system with Intel(R) Core(TM) i7-7820X CPU @ 3.60GHz and 128G memory. 

\subsection{Synthetic Experiments}
\begin{table*}[ht!]
    \centering
    \caption{Prediction measurements on three synthetic datasets and different models. LF denotes the low-frequency dataset, HF the high-frequency dataset, and VF the time-varying dataset. Three prediction measures are provided. RMSE is root mean square error, ALCL is average length of confidence interval, and CR is coverage rate. All three measurements are summarized by the mean and standard deviation across 10 runs with different random initializations, i.e., 2.25(1.33e-13) denotes mean 2.25 with standard deviation 1.33e-13.}
    \label{tab:sim}
    \begin{tabular}{|c|c|c|c|c|}
    \hline
    Data & Model & RMSE & ALCI & CR \\
    \hline
    \multirow{4}{*}{LF} & IGPR \cite{Rasmussen_2004} & 2.25(1.33e-13) & 2.18(1.88e-13) & 0.835(0) \\ \cline{2-5}
    & ICM \cite{wackernagel2013multivariate} & 2.26(2.54e-5) & 2.18(1.22e-5) & 0.835(0)  \\ \cline{2-5}
    & CMOGP \cite{nguyen2014collaborative} & 1.43(6.12e-2) & 1.36(1.98e-1) & 0.651(3.00e-2) \\ \cline{2-5}
    & VGPRN \cite{nguyen2013efficient} & 1.01(0.31) & - & - \\ \cline{2-5}
    \cline{2-5}
    & VSGPRN & \textbf{1.00(1.43e-1)} & 2.21(6.56e-2) & \textbf{0.892(1.63e-2)} \\ \cline{2-5}
    % & VSGPRN (optimal) & 1.03 & 2.03 & 0.885 \\
    \hline
    \multirow{4}{*}{HF} & IGPR \cite{Rasmussen_2004} & 1.51(6.01e-14) & 3.17(1.30e-13) & 0.915(2.22e-16) \\ \cline{2-5}
    & ICM \cite{wackernagel2013multivariate} & 1.52(1.01e-5) & 3.17(1.19e-5) & 0.910(0) \\ \cline{2-5} 
    & CMOGP \cite{nguyen2014collaborative} & 1.29(3.04e-2) & 2.34(3.31e-1) & 0.729(3.07e-2) \\ 
    \cline{2-5}
    & VGPRN \cite{nguyen2013efficient} & 1.11(0.25) & - & - \\ \cline{2-5}
    \cline{2-5}
    & VSGPRN  &\textbf{ 1.10(1.98e-1)} & 2.74(7.94e-2) & \textbf{0.930(1.14e-2)} \\ \cline{2-5}
    % & VSGPRN (optimal) & 0.98 & 2.19 & 0.90 \\ 
    \hline
    \multirow{4}{*}{VF}& IGPR \cite{Rasmussen_2004} & 1.64(8.17e-14) & 3.19(3.02e-13) & 0.875(0) \\ \cline{2-5}
    & ICM \cite{wackernagel2013multivariate} & 1.66(2.37e-3) & 3.16(1.49e-3) & 0.880(1.50e-3) \\ \cline{2-5} 
    & CMOGP \cite{nguyen2014collaborative} & 2.24(3.08e-1) & 2.56(9.29e-1) & 0.697(1.56e-1) \\ \cline{2-5}
    & VGPRN \cite{nguyen2013efficient} & 1.04(0.67) & - & - \\ \cline{2-5}
    \cline{2-5}
    & VSGPRN  & \textbf{1.24(1.33e-1)} & 2.92(1.21e-1) & \textbf{0.887(9.80e-3)} \\ \cline{2-5}
    % & VSGPRN (optimal) & 1.19 & 2.45 & 0.835 \\ 
    \hline
    \end{tabular}
\end{table*}

We conduct experiments on three synthetic time series with low frequency (LF), high frequency (HF) and varying frequency (VF) respectively. They are generated from the system of equations
\begin{align}
y_1(t) & = 5\cos(2\pi wt^{s}) + \epsilon_1(t)\,, \nonumber \\
y_2(t) & = 5(1-t)\cos(2\pi wt^{s}) - 5t\cos(2\pi wt^{s})  + \epsilon_2(t)\,, \label{eq: sim}
\end{align}
where $\{\epsilon_i(t)\}_{i=1}^2$ are independent standard white noise processes. The value of $w$ refers to the frequency and the value of $s$ characterizes the smoothness. The LF and HF datasets use the same $s=1$, implying the smoothness is invariant across time. But they employ different frequencies,  $w=2$ for LF and $w=5$ for HF (i.e., two periods and five periods in a unit time interval respectively). The VF dataset takes $s=2$ and $w=5$, so that the frequency of the function is gradually increasing as time increases. For all three datasets, the system $(\ref{eq: sim})$ shows that as time $t$ increases from $0$ to $1$, the correlation between $y_1(t)$ and $y_2(t)$ gradually varies from positive to negative. Within each dataset, we randomly select 200 training data, in which 100 time stamps are sampled on the interval $(0, 0.8)$ for the first dimension and the other 100 time stamps sampled on the interval $(0.2, 1)$ for the second dimension. For the test inputs, we randomly select 100 time stamps on the interval $(0, 1)$ for each dimension. 

\begin{figure}[!ht]
    %% right-hand side: a minipage that contains two more subfigures
    \bigskip
    \noindent
    \begin{minipage}{0.68\textwidth}
    \begin{subfigure}{\linewidth}
    \subcaption{} \label{subfig:upper-left}
    \vspace{-3mm}
    \includegraphics[width=0.32\linewidth]{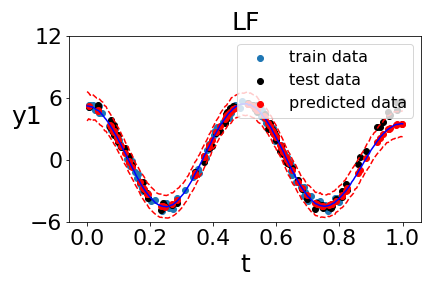}
    \includegraphics[width=0.32\linewidth]{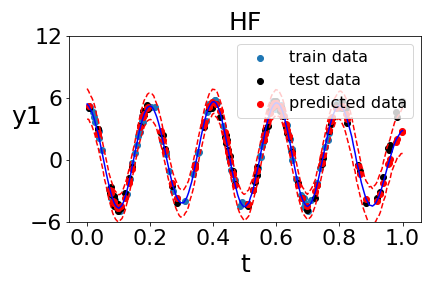}
    \includegraphics[width=0.32\linewidth]{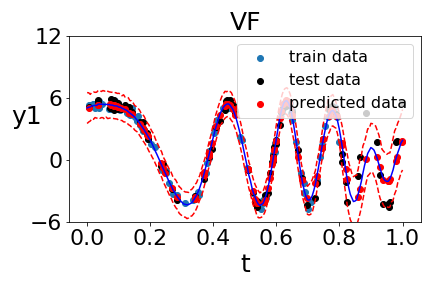}
    \end{subfigure}
    % \vspace*{0.6cm}
    \begin{subfigure}{\linewidth}
    \subcaption{} \label{subfig:lower-left}
    \vspace{-3mm}
    \includegraphics[width=0.32\linewidth]{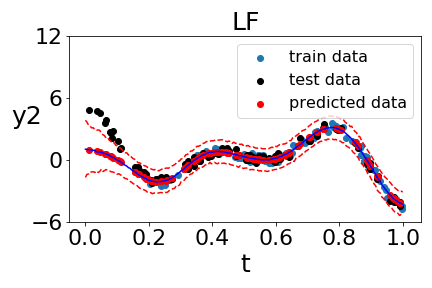}
    \includegraphics[width=0.32\linewidth]{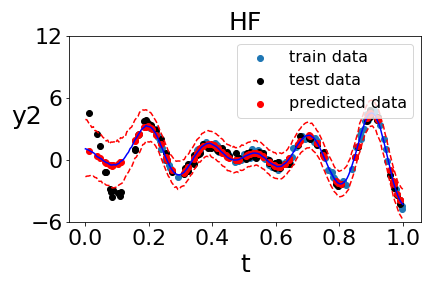}
    \includegraphics[width=0.32\linewidth]{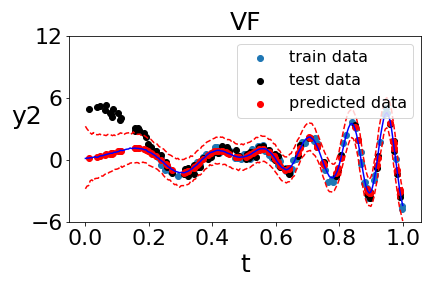}
    \end{subfigure}
    \end{minipage}
    \hfill
    \noindent
    %% horizontal separation between the left and right hand sides
    % \hspace*{\fill}
    \vspace{-0.2cm}
    %% left-hand side: a single subfigure
    \begin{subfigure}{0.3\textwidth}
    \subcaption{} \label{subfig:right}
    \vspace{-3mm}
    \includegraphics[width=0.9\linewidth,height=1.4in]{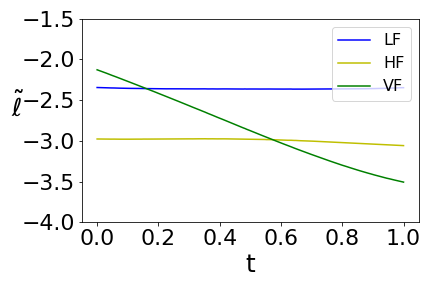}
    \end{subfigure}
    
    \caption{Posterior analysis for three synthetic datasets, LF, HF and VF. (\ref{subfig:upper-left}), (\ref{subfig:lower-left}) display the posterior predictive processes in Dimension 1 and Dimension 2, respectively. Blue, black and red dots refer to the training, testing and predictive data and the red dashed lines refer to $95\%$ credible bands. (\ref{subfig:right}) shows the estimates of log length-scale function via posterior mean.}
    \label{fig:pos_VSGPRN_syn}
\end{figure}

In the stochastic optimization, we use the learning rates of $0.005$ for all parameters with $2000$ epochs. In this experiment, we standardize both inputs and outputs and initialzie length-scale $\exp(2)$ for covariance function $K^l$ and length-scale $\exp(0)$ for covariance function $K^\ell$. We optimize all hyper-parameters in the optimization.

We quantify the model performance in terms of root mean square error (RMSE), average length of confidence interval (ALCI), and coverage rate (CR) on the test set. A smaller RMSE corresponds to better predictive performance of the model, and a smaller ALCI implies a smaller predictive uncertainty. As for CR, The better the model prediction performance is, the closer CR is to the percentile of the credible band. Those results are reported by the mean and standard deviation with 10 different random initializations of model parameters. Quantitative comparisons relating to all three datasets are in Table~\ref{tab:sim}. We compare with independent Gaussian process regression (IGPR) \citep{Rasmussen_2004}, the intrinsic coregionalization model (ICM) \citep{wackernagel2013multivariate}, Collaborative Multi-Output Gaussian Processes (CMOGP) \citep{nguyen2014collaborative} and variational inference of Gaussian process regression network \cite{nguyen2013efficient} on three synthetic datasets. In both CMOGP and VSGPRN approaches, we use $20$ inducing variables. We did not compare with the maximum a posteriori inference in the SGPRN model \cite{meng2021nonstationary} because the corresponding inference cannot handle missing data. 

We report the posterior predictive processes of Dimension 1 (Figure~\ref{subfig:upper-left}) and Dimension 2 (Figure~\ref{subfig:lower-left}) for datasets LF, HF and VF in Figure~\ref{fig:pos_VSGPRN_syn}. Blue, black and red dots refer to the training, testing and predictive data and the red dashed lines refer to $95\%$ credible bands. Comparing the predictive data between the two dimensions, we find that VSGPRN correctly learns the varying correlation from positive to negative. VSGPRN also displays the correct characteristics of the smoothness (Figure~\ref{subfig:right}) in three different datasets. Table~\ref{tab:sim} illustrates that VGPRN and VSGPRN have similar average predictive performs and in the varying frequency case, VGPRN performs better. That is because VSGPRN introduces inducing variables for sparse approaximation while VGPRN does not. Both VSGPRN and VGPRN significantly outperform other models, because they model the dependence of outputs. Compared with VGPRN, VSGPRN has a significantly smaller uncertainty of the prediction results, becasue of the smaller prediction standard deviation. That is because GPRN model is not identifiable on the mixing coefficients, which would lead to very sensitive prediction results that strongly depend on parameter initializations and thus this issue makes model interpretation meaningless. SGPRN has a weakly identifiable structure on the mixing coefficients and it would make inference more robust and render more meaningful interpretation on the data.

\subsection{Real Data Experiments}
We further examined model predictive performance on three real-world datasets. Due to the large size of real data, the standard Gaussian process models tested the in synthetic experiments cannot handle them. Therefore, we compare our model with two sparse Gaussian process models, i.e., independent sparse Gaussian process regression (ISGPR) \citep{Snelson_2006} and the sparse linear model of corregionalization (SLMC) \citep{wackernagel2013multivariate} implemented using the GPy package from the Sheffield machine learning group. Moreover, we explore the dynamics of correlation of neuronal activity visually with electorcorticography data.

\subsubsection{Environmental Time Series Data}
The first experiment is conducted on a PM2.5 dataset, coming from the UCI Machine Learning Repository \citep{liang_2015}. PM2.5 describes fine inhalable particles with diameters that are generally 2.5 micrometers and smaller and this dataset is hourly data containing the PM2.5 samples in five cities in China along with meteorological data, from Jan 1st, 2010 to Dec 31st, 2015.  We consider six important attributes: PM2.5 concentration (PM), dew point (DEWP), temperature (TEMP), humidity (HUMI), pressure (PRES) and cumulated wind speed (lws). In order to be able to compare with SLMC, which cannot run on the whole dataset, we use the first 5000 standardized multivariate records. Those records have 290 missing values and 29710 observed values. For each feature, we standardize by subtracting the mean value and dividing by its standard deviation. Finally, $20\%$ of data of PM are taken as testing data while the remaining are treated as training data. Thus, in the 5000 records, there are 28768 output variable observations in the training set, and 942 PM values in the test set.

We considered three independent experiments for VSGPRN with $50, 100$ and $200$ equispaced inducing inputs on time range $(0, 5000)$. The length-scale parameters were set to $\exp(10)$ for both $K^L$ and $K_\ell$, ran for $30$ epochs with learning rate $0.01$, and had batch size $1024$. For the comparators, we fit SGPR and SLMC models with $100$ equispaced inducing inputs. The root mean squared errors (RMSE) on the testing data are shown in Table~\ref{tab:real data results}, illustrating that VSGPRN had better prediction performance compared with the ISGPR and SLMC, even when using less inducing points. We show prediction results with different mini-batch sizes in the Appendix B.

\begin{table*}[ht!]
    \centering
    \caption{Empirical results for PM2.5 dataset and HCP dataset. Each model's performance is summarized by root mean square error on testing data (RMSE). The number of inducing points is given in parentheses.}
    \label{tab:real data results}
    \resizebox{\linewidth}{!}{
    \begin{tabular}{|c|c|c|c|c|c|}
    \hline
    Data & ISGPR (100) \citep{Snelson_2006} & SLMC (100) \citep{wackernagel2013multivariate} & VSGPRN (50) & VSGPRN (100) & VSGPRN (200) \\
    \hline
    PM2.5 & 0.994 & 0.948 & 0.840 & 0.708 & 0.625 \\
    \hline 
    HCP & 1.023 & - & 1.008 & 0.997 & 0.899 \\
    \hline 
    % ECoG & 0.311 & 0.446 & 0.707 & 0.658 & 0.657 \\
    % \hline
    \end{tabular}
    }
\end{table*}

\subsubsection{Resting-State Functional MRI Data}
The second experiment explores the functional connectivity of the brain, using a publicly available resting-state functional MRI (rs-fMRI) database obtained from the Human Connectome Project (HCP) S12000 data release \citep{smith2013resting} for 812 subjects. The HCP pre-processing pipeline~\citep{wu20171200} yielded one representative time series across 4800 time points per independent component analysis (ICA) component for each subject at several different dimensionalities. We used the rs-fMRI timeseries from 15 ICA components with a random subject ID 990366 in this experiment. Specifically, we standardized the ICA components by subtracting the mean value of each feature and dividing it by its standard deviation. $20\%$ of data in the first component are treated as the testing data while the remaining are treated as training data. Then we have 71040 training data and 960 testing data. 

We conducted the experiments using the same models used for the PM2.5 dataset. Because the SLMC model does not scale well, the prediction result is not available via our computing resource. Therefore, we only compared our results on VSGPRN and ISGPR for the HCP dataset in Table~\ref{tab:real data results}. In this experiment, VSGPRN sets the length-scale parameters to $\exp(5)$ for both $K^L$ and $K_\ell$ and run $10$ epochs with learning rate $0.01$ and batch size $1024$. Table~\ref{tab:real data results} shows that as the number of inducing points increases, the prediction performance improves and our model always outperforms the ISGPR. Even when we only take 50 inducing inputs for VSGPRN, the prediction result is still better than that for ISGPR with 100 inducing inputs. In the Appendix B, we report the prediction results with different mini-batch sizes.

\subsubsection{Electrocorticography Data}
Finally, we evaluate our model on Electrocorticography (ECoG) data collected in the Bouchard Lab\cite{dougherty2019laminar}. High-gamma activity from ECoG is a commonly-used signal containing the majority of task relevant information for understanding the human brain  \citep{livezey2019deep}, and the experiments in \cite{dougherty2019laminar} record $\mu$ECoG cortical surface electrical potentials (CSEPs) from 128 channels and demonstrate that stimulus evoked CSEPs carry a multi-modal frequency response  peaking in the $H\gamma$ range (70-170Hz). We selected a 5-second time interval and extracted the z-scored high gamma band from 25 channels on $5 \times 5$ subgrid located at the center of the $16 \times 8$ grid. For each channel, the records are sampled at $400Hz$ and thus we have $2000$ time points. To illustrate the data, we plotted the functional boxplot \citep{sun2011functional,meng2017growth} for the time series of the 25 channels in Figure~\ref{fig:ECOG}. Next we conduct two experiments, experiment $\mathcal{E}_1$ for data within 2 seconds and experiment $\mathcal{E}_2$ for data within 5 seconds. We run $40$ epochs for each experiment. To keep the number of iterations within each epoch for both experiments close, we set the batch size as $512$ for $\mathcal{E}_1$ and $2048$ for $\mathcal{E}_2$. 

\begin{figure}[!ht]
    \centering
    \begin{subfigure}{0.45\linewidth}
    \subcaption{} \label{subfig:z-scores}
    \vspace{-3mm}
    \includegraphics[width = \linewidth, height = 30mm]{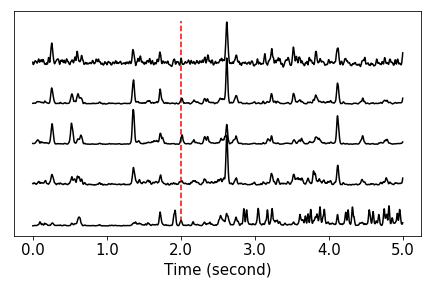}
    \end{subfigure}
    \begin{subfigure}{0.45\linewidth}
    \subcaption{} \label{subfig:z-score_fboxplot}
    \vspace{-3mm}
    \includegraphics[width = \linewidth, height = 30mm]{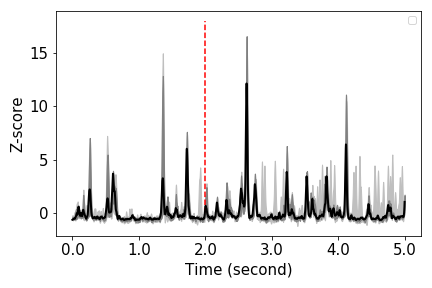}
    \end{subfigure}
    \hspace*{\fill}%
    \begin{subfigure}{0.45\linewidth}
    \subcaption{} \label{subfig:E1_pos}
    \vspace{-3mm}
    \includegraphics[width = \linewidth, height = 30mm]{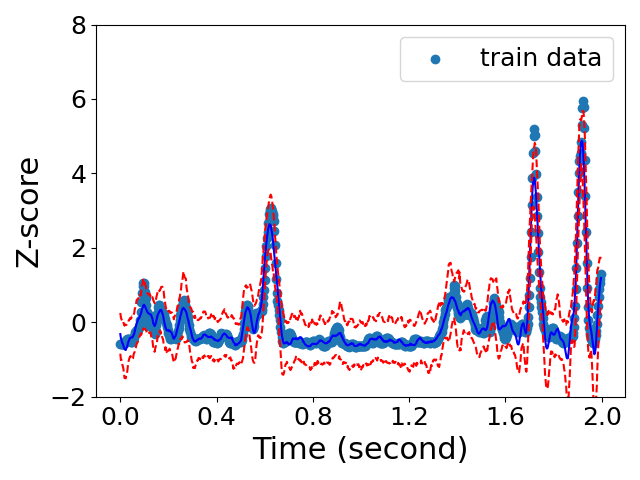}
    \end{subfigure}
    \begin{subfigure}{0.45\linewidth}
    \subcaption{} \label{subfig:E2_pos}
    \vspace{-3mm}
    \includegraphics[width = \linewidth, height = 30mm]{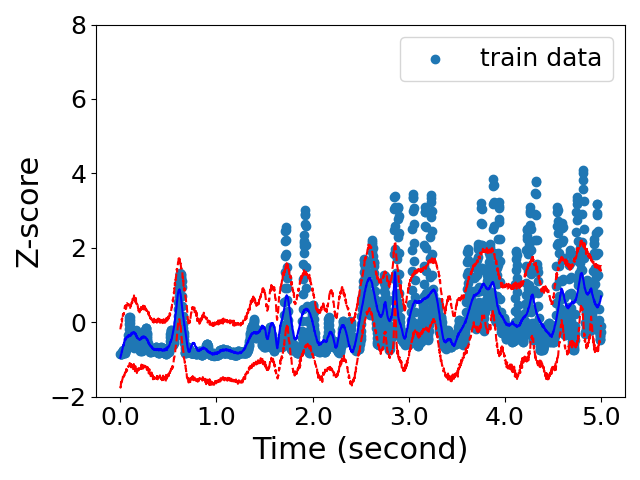}
    \end{subfigure}
    \hspace*{\fill}%
    \begin{subfigure}{0.9\linewidth}
    \subcaption{} \label{subfig:E1_corrs}
    \vspace{-3mm}
    \includegraphics[width = \linewidth, height = 30mm]{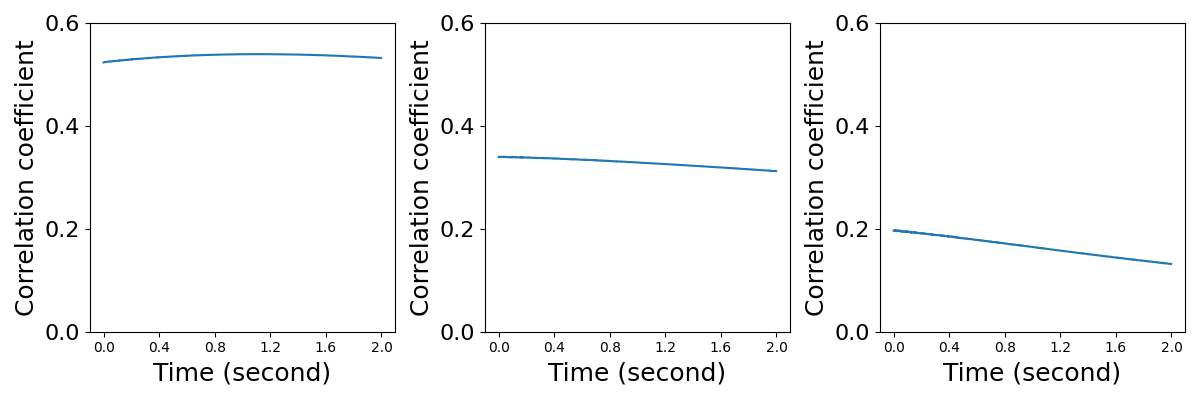}
    \end{subfigure}
    \hspace*{\fill}%
    \begin{subfigure}{0.9\linewidth}
    \subcaption{} \label{subfig:E2_corrs}
    \vspace{-3mm}
    \includegraphics[width = \linewidth, height = 30mm]{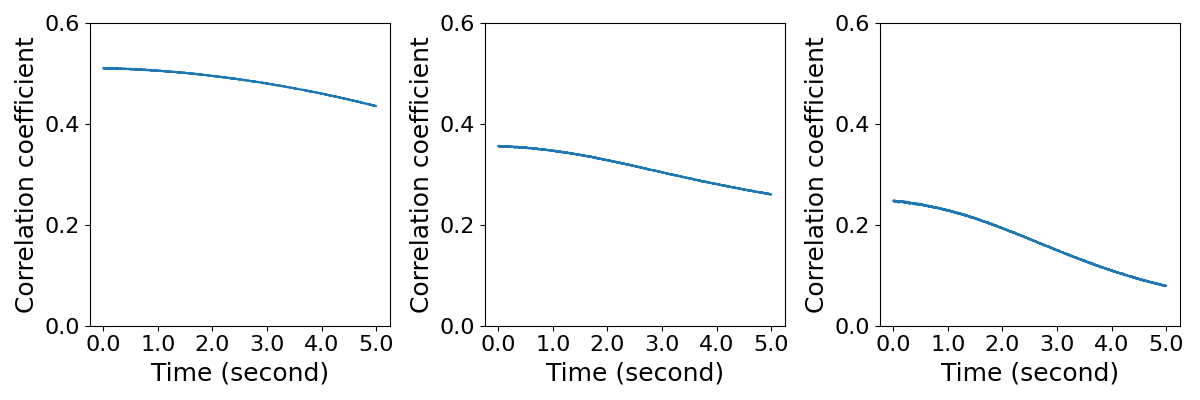}
    \end{subfigure}
    \caption{Information of the ECoG data and corresponding posterior analysis for experiment $\mathcal{E}_1$ and experiment $\mathcal{E}_2$. (\ref{subfig:z-scores}) refers to the z-score curves for the five channels on the diagonal of grids. (\ref{subfig:z-score_fboxplot}) provides the functional boxplot of all z-score curves on the grids. (\ref{subfig:E1_pos}) and (\ref{subfig:E2_pos}) show the posterior predictive process for the centered channel, and (\ref{subfig:E1_corrs}) and (\ref{subfig:E2_corrs}) show the averaged correlation processes for distance $d = 1, 2, 3$ for experiment $\mathcal{E}_1$ and experiment $\mathcal{E}_2$ respectively.}
    \label{fig:ECOG}
\end{figure}

Since ECoG data are high-frequency sampled and z-scores in the frequency domain are smooth, the naive interpolation can achieve high predictive accuracy and thus prediction task is not of interest. So we report the prediction analysis in the Appendix B. In this section, our interest is to explore the time-varying cross-correlation across channels. We trained the whole data in experiment $\mathcal{E}_1$ and experiment $\mathcal{E}_2$. Assume we treat the distance between two consecutive samples as one unit, i.e., $t_{i+1}- t_{i} = 1$. We considered $50$ fixed equally spaced inducing inputs on the time interval. We set the length-scale parameters to $\exp(10)$ for $K^L$ by assuming that the coefficient should smoothly change across time. For the length-scale parameters for $K_\ell$, we assumed that the length-scale function $\ell$ is flexible and less smooth and set $\exp(5)$ for $\mathcal{E}_1$ and $\exp(2)$ for $\mathcal{E}_2$. This is because the distances between the consecutive inducing inputs in $\mathcal{E}_1$ are $\mathcal{E}_2$ are $16$ and $40$ and the default hyper-parameters guarantee the GP can reasonably learn the dependence. 

We show the posterior predictive processes for one channel from two experiments in Figure~\ref{fig:ECOG}. It shows that the 95\% credible interval of the posterior predictive process includes almost $95\%$ observations in $\mathcal{E}_1$, while the 95\% credible interval of the posterior predictive process in $\mathcal{E}_2$ performs worse and cannot peaks or spikes. On the other hand, considering the same number of inducing inputs, as the length of time series increases, the prediction performance becomes worse. This is because as the length of time series increases, the same number of inducing variables are required to summarize more nonstationary information. This causes the model to have more difficulty simultaneously being sensitive to the local information, making the prediction process more smooth as shown in Figure~\ref{fig:ECOG}. To model local information, more inducing points are required.

We estimated the correlation process for pairwise channels using the posterior mean and took the average of those processes for which the distance between all pairs of channels with a constant $d$. We plot the averaged correlation processes for distance $d = 1, 2, 3$ in Figure~\ref{fig:ECOG}. The resulting correlation processes from the two experiments show that the dynamical behavior in $\mathcal{E}_2$ in the first 2 seconds is consistent with that in $\mathcal{E}_1$. It implies our model estimates are robust to the length of the time series. Moreover, as the distance between pair of channels increases, the correlation decreases. This is in agreement with the known neurobiology.

\section{Conclusions} \label{sec:conclusion}
We propose a novel variational inference approach for structured Gaussian process regression network named variational structured Gaussian process regression network (VSGPRN). We introduce inducing variables and proposed structured variational distribution to reduce computational burden. Moverover, we perform exact marginalization of latent variables and construct tractable lower bound of log likelihood to allow it suitable for doubly stochastic inference. In our method, the computation complexity is independent of the size of inputs and outputs. Compared with the likelihood based inference for SGPRN model in \cite{meng2021nonstationary}, our model provides a natural extension to missing data cases and succeeds in handling time-varying correlations under different scenarios. We also show that VSGPRN achieves better imputation performance on missing data than state-of-the-art models in synthetic experiments and real-world data experiments. Moreover, we provide an estimation of the correlation of outputs across input domains, as demonstrated in the ECoG experiment. We find that as the distance between two channels increase the correlation decreases, which is in agreement with the known neurobiology. In the future, we will explore extending the VSGPRN model to incorporate the effects of exogenous variables, which will better model the ECoG experiments, in which the animal was being presented with auditory stimuli, which drives the recorded neural activity.

\appendix
\section{Derivations in Model inference}

\subsection{Derivations for prior distributions}

Given the model specification, we have the conditional distributions in (8), (9) and (10). The conditional mean and covariance matrices are derived as follows:
\begin{align}
    \bm \mu_{ij}^l & = K^l(\bm X, \bm Z)K^l(\bm Z, \bm Z)^{-1}\bm u_{ij} \,, \nonumber \\
    \bm \mu_{d}^g & = K^g(\bm X, \bm Z)K^g(\bm Z, \bm Z)^{-1}\bm w_{d} \,, \nonumber \\
    \bm \mu^\ell & = K^\ell(\bm X, \bm Z)K^\ell(\bm Z, \bm Z)^{-1}\bm v \,, \nonumber \\
    \Sigma_{ij}^l & = K^l(\bm X, \bm X) -  K^l(\bm X, \bm Z)K^l(\bm Z, \bm Z)^{-1}K^l(\bm Z, \bm X) \,, \nonumber\\
    \Sigma_{ij}^g & = K^g(\bm X, \bm X) -  K^g(\bm X, \bm Z)K^g(\bm Z, \bm Z)^{-1}K^g(\bm Z, \bm X) \,, \nonumber \\
    \Sigma_{ij}^\ell & = K^\ell(\bm X, \bm X) -  K^\ell(\bm X, \bm Z)K^\ell(\bm Z, \bm Z)^{-1}K^\ell(\bm Z, \bm X) \,. \nonumber
\end{align}

\subsection{Derivations for variational distributions}

We claim that the conditional mean and covariance matrices are shown as follows:

\begin{align}
    \tilde{\bm \mu}_{ij}^l & = K^l(\bm X, \bm Z)K^l(\bm Z, \bm Z)^{-1} \bm m_{ij}^u \,, \nonumber \\
    \tilde{\bm \mu}_d^g & = K^g(\bm X, \bm Z)K^g(\bm Z, \bm Z)^{-1} \bm m_{ij}^w \,, \nonumber \\
    \tilde{\Sigma}_{ij}^l & = K^l(\bm X, \bm X) - K^l(\bm X, \bm Z)K^l(\bm Z, \bm Z)^{-1}K^l(\bm Z, \bm X) \nonumber \\
    & + K^l(\bm X, \bm Z)K^l(\bm Z, \bm Z)^{-1}S_{ij}^uK^l(\bm Z, \bm Z)^{-1}K^l(\bm Z, \bm X) \,, \nonumber \\
    \tilde{\Sigma}_d^g & = K^g(\bm X, \bm X) - K^g(\bm X, \bm Z)K^g(\bm Z, \bm Z)^{-1}K^g(\bm Z, \bm X) \nonumber \\
    & + K^g(\bm X, \bm Z)K^g(\bm Z, \bm Z)^{-1}S_{ij}^w K^g(\bm Z, \bm Z)^{-1}K^g(\bm Z, \bm X) \,.\nonumber
\end{align}

This result comes from the Lemma~\ref{lemma1}.

\begin{lemma}\label{lemma1}
    Suppose $\bm y| \bm x \sim \mathcal{N}(K_{12}K_{22}^{-1}\bm x, K_{11} - K_{12}K_{22}^{-1}K_{21})$ and $\bm x \sim \mathcal{N}(\bm \mu, \Sigma)$. The marginalized distribution of $Y$ is 
    \begin{align}
        \bm y \sim \mathcal{N}(\tilde{\bm \mu}, \tilde{\Sigma})\,, \nonumber
    \end{align}
    where $\tilde{\bm \mu} = K_{12}K_{22}^{-1}\bm \mu$ and  $\tilde{\Sigma} = K_{11} - K_{12}K_{22}^{-1}K_{21} + K_{12}K_{22}^{-1}\Sigma K_{22}^{-1}K_{21}$. And the marginal distribution of $y_n$ is 
    \begin{align}
        y_n \sim \mathcal{N}(\tilde{\mu}_n, \tilde{\sigma}^2_n) \,, \nonumber
    \end{align}
    where $\tilde{\mu}_n = \tilde{k}_n^TK_{22}\bm u$ and $\tilde{\sigma}^2_n = \sigma^2_n - \tilde{k}_n^TK_{22}\tilde{k}_n^T + \tilde{k}_n^TK_{22}^{-1}\Sigma K_{22}^{-1}\tilde{k}_n$. $\tilde{k}_n^T$ is the $n^{\text{th}}$ row of $K_{12}$ and $\tilde{\sigma}^2_n$ is the $i^{\text{t}}$ element of the diagonal of $K_{11}$.
\end{lemma}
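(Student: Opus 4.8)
The plan is to treat this as a standard linear--Gaussian marginalization: I would exhibit $\bm y$ as an affine function of $\bm x$ plus independent Gaussian noise, invoke closure of the Gaussian family under such operations, and then match the first two moments against the claimed $(\tilde{\bm\mu},\tilde\Sigma)$. Concretely, since $\bm y \mid \bm x$ is Gaussian with mean linear in $\bm x$, I can write $\bm y = A\bm x + \bm\epsilon$ with $A := K_{12}K_{22}^{-1}$ and $\bm\epsilon \sim \mathcal{N}(\bm 0,\, K_{11}-K_{12}K_{22}^{-1}K_{21})$ drawn independently of $\bm x$. Because $\bm x$ is Gaussian and $\bm\epsilon$ is an independent Gaussian, the affine combination $\bm y$ is again Gaussian, so it suffices to compute its mean and covariance; no integration of the joint density is needed.

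For the mean, the tower property gives $\mathbb{E}[\bm y] = A\,\mathbb{E}[\bm x] = K_{12}K_{22}^{-1}\bm\mu = \tilde{\bm\mu}$. For the covariance, the law of total variance (equivalently, expanding $\operatorname{Cov}(A\bm x+\bm\epsilon)$ using the independence of $\bm x$ and $\bm\epsilon$) yields $\operatorname{Cov}(\bm y) = \operatorname{Cov}(\bm\epsilon) + A\,\Sigma\,A^{T}$. The noise term contributes $K_{11}-K_{12}K_{22}^{-1}K_{21}$, and the propagated-uncertainty term is $K_{12}K_{22}^{-1}\,\Sigma\,(K_{12}K_{22}^{-1})^{T}$.

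The one point requiring care is simplifying this transpose so that it matches $\tilde\Sigma$ verbatim. Using symmetry of the kernel Gram matrix, $K_{22}^{T}=K_{22}$ (hence $K_{22}^{-T}=K_{22}^{-1}$), together with $K_{12}^{T}=K_{21}$, I would rewrite $(K_{12}K_{22}^{-1})^{T} = K_{22}^{-1}K_{21}$, turning the correction term into exactly $K_{12}K_{22}^{-1}\,\Sigma\,K_{22}^{-1}K_{21}$, whence $\operatorname{Cov}(\bm y)=\tilde\Sigma$. Finally, the scalar marginal of $y_n$ follows by restricting the matrix identity to the $n$-th coordinate: its mean is the $n$-th entry of $\tilde{\bm\mu}$, i.e. $\tilde k_n^{T}K_{22}^{-1}\bm\mu$ with $\tilde k_n^{T}$ the $n$-th row of $K_{12}$, and its variance is the $(n,n)$ diagonal entry of $\tilde\Sigma$, namely $\sigma_n^2 - \tilde k_n^{T}K_{22}^{-1}\tilde k_n + \tilde k_n^{T}K_{22}^{-1}\Sigma K_{22}^{-1}\tilde k_n$. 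There is no substantive obstacle here beyond this bookkeeping; Gaussianity is automatic from the affine-plus-independent-noise representation, and the whole argument reduces to correctly transposing a single product and reading off a diagonal.
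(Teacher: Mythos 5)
Your argument is correct and complete. Note that the paper itself states Lemma~1 without any proof (only Lemma~2 is proved in the supplement), so there is no authorial argument to compare against; yours is the standard one, and it is the right one. Representing the conditional as $\bm y = K_{12}K_{22}^{-1}\bm x + \bm\epsilon$ with $\bm\epsilon$ an independent zero-mean Gaussian whose covariance is the (constant) conditional covariance is exactly equivalent to the stated conditional law, so Gaussianity of the marginal is immediate and only the two moments need checking; your use of $K_{22}^{T}=K_{22}$ and $K_{12}^{T}=K_{21}$ to reduce $(K_{12}K_{22}^{-1})^{T}$ to $K_{22}^{-1}K_{21}$ is the only nontrivial bookkeeping, and you handle it correctly. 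One further point in your favor: the scalar-marginal formulas as printed in the lemma contain typographical errors ($\tilde{\mu}_n = \tilde{k}_n^{T}K_{22}\bm u$ should read $\tilde{k}_n^{T}K_{22}^{-1}\bm\mu$, and the middle term of $\tilde{\sigma}_n^2$ should be $\tilde{k}_n^{T}K_{22}^{-1}\tilde{k}_n$, consistent with the matrix-level $\tilde\Sigma$); the versions you derive by reading off the $n$-th diagonal entry of $\tilde\Sigma$ are the correct ones.
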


\subsection{Derivations in the computation of ELBO}
We first introduce Lemma~\ref{lemma2} as follows

\begin{lemma}\label{lemma2}
    Suppose $\bm y| \bm f \sim \mathcal{N}(\bm y| \bm X\bm f, \sigma^2\bm I)$ and $\bm f \sim \mathcal{N}(\bm f| \bm u, \bm \Sigma)$. Then
    \begin{align}
        \int \log\mathcal{N}(\bm y| \bm X\bm f, \sigma^2\bm I)\mathcal{N}(\bm f| \bm \mu, \bm \Sigma)d\bm f = \log\mathcal{N}(\bm y| \bm X\bm \mu, \sigma^2\bm I) -\frac{1}{2\sigma^2}\mathrm{tr}(\bm X^T\bm X\Sigma).
    \end{align}
\end{lemma}

\begin{proof}
Let the dimension of $\bm y$ be $n$, then
\begin{align}
    & \int \log\mathcal{N}(\bm y| \bm X\bm f, \sigma^2\bm I)\mathcal{N}(\bm f| \bm \mu, \bm \Sigma)d\bm f \nonumber \\
    & = E_{p(\bm f|\bm \mu, \bm \Sigma)}(-\frac{n}{2}\log(2\pi\sigma^2) - \frac{1}{2\sigma^2}(\bm y^T\bm y - 2\bm y^T(\bm X\bm f) + \bm f^T\bm X^T \bm X\bm f)) \nonumber \\
    & = -\frac{n}{2}\log(2\pi\sigma^2) - \frac{1}{2\sigma^2}\mathrm{tr}\left(\bm y \bm y^T - 2 \bm X\bm u\bm y^T + \bm X^T \bm X(\bm \mu\bm \mu^T + \Sigma)\right) \nonumber \\
    & = \log\mathcal{N}(\bm y| \bm X\bm \mu, \sigma^2 \bm I) -\frac{1}{2\sigma^2}\mathrm{tr}(\bm X^T\bm X\Sigma) \,. \nonumber 
\end{align}
\end{proof}

Then according to Lemma~\ref{lemma2}, the individual expectation can be rewritten as
\begin{align}
    & \mathrm{E}_{q(\bm g_n, \bm l_n)}\log(p(y_{nd}|\bm g_n, \bm l_n)) \nonumber \\
    & = \int \log\mathcal{N}(y_{nd}| \bm l_{d\cdot n}^T \bm g_{n}, \sigma_{err}^2)\mathcal{N}(\bm g_n| \tilde{\bm \mu}^g_{n}, \mathrm{diag}(\tilde{\sigma}^{g2}_{1n}, \cdots, \tilde{\sigma}^{g2}_{Dn}))\nonumber \\ 
    & \quad q(\tilde{\bm \ell}, \bm v)q(\bm l_{d\cdot n})d(\bm l_{d\cdot n}, \bm g_n, \bm \tilde{\bm \ell} ,\bm v) \nonumber \\
    & = \int\left(\log\mathcal{N}(y_{nd}|\sum_{j=1}^D l_{djn}\tilde{\bm \mu}^g_{n}, \sigma^2_{err}) - \frac{1}{2\sigma^2_{err}}\sum_{j=1}^D l_{djn}^2\tilde{\sigma}^{g2}_{jn}\right) \nonumber \\
    & \quad q(\tilde{\bm \ell}, \bm v)q(\bm l_{d\cdot n})d(\bm l_{d\cdot n}, \tilde{\bm \ell}, \bm v)\,. \nonumber
\end{align}

\subsection{Derivations for reparameterization}
The re-parameterization is proposed for the distribution  $q(\ell_n, \bm v)$, $q(\bm g_n| \ell_n, \bm v)$ and $q(\bm l_{d\cdot n})$.

\begin{itemize}
    \item As for $q(\ell_n, \bm v)$, we have
    \begin{align}
        \bm v & = \bm m^v + S^{v\frac{1}{2}}\bm z^v \,, \nonumber \\
        \ell_n & = \exp\Big(K^\ell(\bm x_n, \bm Z)K^\ell(\bm Z, \bm Z)^{-1} \bm v + \Big(K^\ell(\bm x_n, \bm x_n) \nonumber \\
        & \quad
        - K^\ell(\bm x_n, \bm Z)K^{\ell}(\bm Z, \bm Z)^{-1}K^{\ell}(\bm Z,\bm x_n) \Big)^{\frac{1}{2}}z_n^\ell \Big)\,, \nonumber
    \end{align}
    where $\bm z^v \sim \mathcal{N}(\bm 0, I)$ and $\bm z_n^\ell \sim \mathcal{N}(0, 1)$.
    
    \item As for $q(\bm g_n| \ell_n, \bm v)$, we have 
    \begin{align}
        g_{dn} & = \tilde{\mu}_{dn}^g + \tilde{\sigma}_{dn}^g z_{dn}^g \nonumber
    \end{align}
    where $z_{dn}^g \sim \mathcal{N}(0,1)$.
    
    \item Finally, as for $q(\bm l_{d\cdot n})$, we have
    \begin{align}
        l_{ijn} = \begin{cases}
                      \tilde{\mu}^l_{iin} + \tilde{\sigma}^l_{iin}z^l_{iin} & i = j \\
                      \exp\left(\tilde{\mu}^l_{ijn} + \tilde{\sigma}^l_{ijn}z^l_{ijn}\right) & i > j
                  \end{cases} \nonumber
    \end{align}
    where $z^l_{ijn} \stackrel{iid}{\sim} \mathcal{N}(0,1)$. 
\end{itemize}

\section{ECoG experiments}

\subsection{Prediction performance}
 In experiment $\mathcal{E}_1$, to compare the models' performance, we randomly took $20\%$ of data in the centered channel as testing data and took the remaining as training data. It implies we have $160$ samples in the centered channel in the training set and $19840$ samples in the testing set. The root mean square errors of testing data are reported. The RMSEs for ISGPR(100), SLMC(100), VSGPRN(50), VSGPRN(100) and VSGPRN(200) are $0.311$, $0.446$, $0.707$, $0.658$ and $0.657$. The number in the bracket refers to the number of inducing points. The predictive results show that our model cannot compete ISGPR and SLMC models. The reason is because that ECoG data is smooth with negligible noise in each channel and it is easy to predict the testing data using the nearby data within the channel. Learning the cross-correlation in ECoG data using VSGPRN approach does not contribute to better prediction result. Moreover, the inference in VSGPRN approach overestimate the variance of noises and causes under-fitting result in ECoG data . However, VSGPRN approach can provide estimates of the time-varying correlation for pairwise channels while other models cannot.

\subsection{Prediction result under different mini-batch sizes}

We explored how the mini-batch size $B$ affects the prediction result in datasets, PM2.5 and HCP. Specifically, considering different number of mini-batch sizes, we plotted the RMSEs on the testing data during the training process in Figure~\ref{fig:RMSE_BS}. 

For both datasets, Figure~\ref{fig:RMSE_BS} illustrates that RMSEs would converge to the same value as training time increases. 
In the PM2.5 data, the prediction performance monotonically improves with time increasing. However, in the HCP data, the RMSEs with different mini-batch sizes converge differently. When the batch size increases, the prediction performance becomes better. Empirical results for the PM2.5 data and HCP data suggest that the mini-batch size may affect the predictive performance in practice. The behavior depends on the characteristics of data.

\begin{figure}[ht!]
    \centering
    \includegraphics[width=0.45\linewidth]{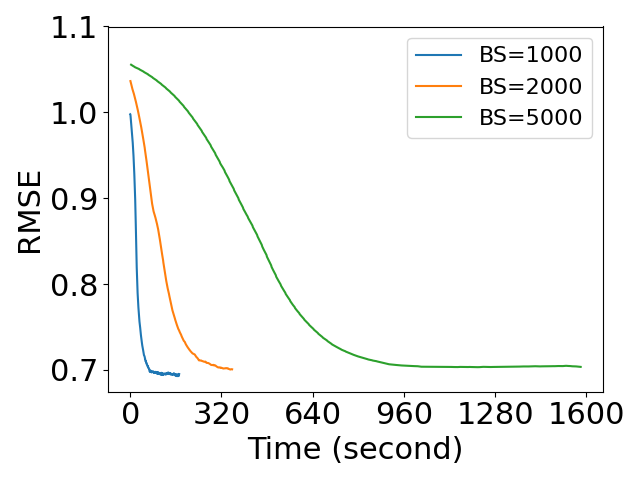}
    \includegraphics[width=0.45\linewidth]{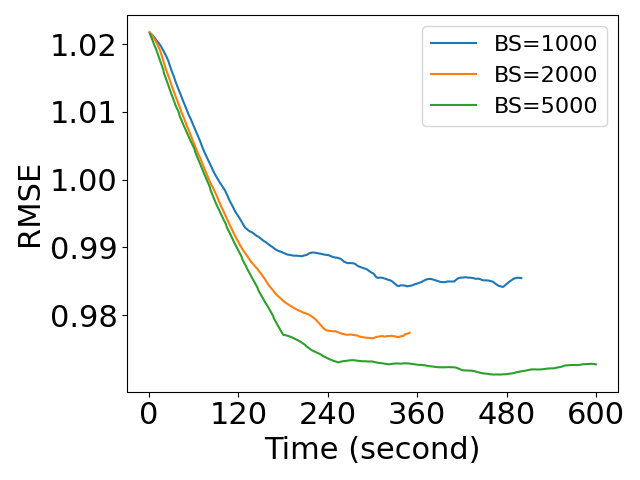}
    \caption{The root mean squared error on the testing data for two datasets i.e. PM2.5 and HCP. All experiments are conducted using VSGPRN method with the same $100$ equispace inducing inputs but considering different mini batch-sizes (BS).}
    \label{fig:RMSE_BS}
\end{figure}

% \subsection{Running time}

% We provide the training time per epoch (Tr. Time/epoch) and testing time (Test Time) in Table~\ref{tab:real data results time}.

% \begin{table}[ht!]
%     \centering
%     \caption{Empirical results for PM2.5 dataset and HCP dataset. Each model's performance is summarized by training time per epoch (Tr. Time/epoch) and testing time (Test Time). The number of inducing points is given in parentheses.}
%     \label{tab:real data results time}
%     \resizebox{\linewidth}{!}{
%     \begin{tabular}{|c|c|c||c|c||c|c|}
%     \hline
%     \multirow{2}{*}{Model} & \multicolumn{2}{|c||}{PM2.5} & \multicolumn{2}{c||}{HCP} & \multicolumn{2}{c|}{ECoG} \\ \cline{2-7}
%       & Tr.Time & Test & Tr.Time & Test & Tr.Time & Test \\
%      & / epoch & Time & / epoch & Time & / epoch & Time \\ \hline
%     ISGPR (100) \citep{gpy2014} & 0.88s & 0.02s & 0.48s & 0.02s & 0.01s & 0.00s \\ \hline
%     SLMC (100) \citep{gpy2014} & 14.22s & 0.05s & - & - & 25.28s & 0.07s \\ \hline
%     VSGPRN (50) & 3.45s & 0.01s & 20.12s & 0.05s &  37.43s & 0.09s \\ \hline
%     VSGPRN (100) & 8.34s & 0.04s & 45.10s & 0.08s & 117.49s & 0.20s \\ \hline
%     VSGPRN (200) & 19.27s & 0.15s & 208.93s & 0.17s & 209.47s & 0.41s \\ \hline
%     \end{tabular}
%     }
% \end{table}

\clearpage

\bibliographystyle{numcompress}
\bibliography{mybibfile}

\end{document}